\theoremstyle{plain}
\newtheorem{theorem}{Theorem}[section]
\newtheorem{lemma}[theorem]{Lemma}
\theoremstyle{definition}
\newtheorem{definition}[theorem]{Definition}
\theoremstyle{remark}
\title{Does Debiasing Inevitably Degrade the Model Performance}
\author{ Yiran Liu \\
	Tsinghua University\\
	\texttt{liu-yr21@mails.tsinghua.edu.cn} \\
	%% examples of more authors
	\And
	Xiao Liu \\
	Tsinghua University\\
	\texttt{liuxiao21@mails.tsinghua.edu.cn} \\
	\And
	Haotian Chen \\
	Fudan University\\
	\texttt{htchen18@fudan.edu.cn} \\
	\And
	Yang Yu \\
	Tsinghua University\\
	\texttt{yangyu1@tsinghua.edu.cn} \\
	%% \AND
	%% Coauthor \\
	%% Affiliation \\
	%% Address \\
	%% \texttt{email} \\
	%% \And
	%% Coauthor \\
	%% Affiliation \\
	%% Address \\
	%% \texttt{email} \\
	%% \And
	%% Coauthor \\
	%% Affiliation \\
	%% Address \\
	%% \texttt{email} \\
}
\begin{document}
\maketitle

\begin{abstract}
	Gender bias in language models has attracted significant attention due to its potential to undermine social justice. However, current debiasing methods often result in performance degradation, whose mechanism is not well understood. We analyzed the causality structure that generates gender bias and discovered the mechanism that debiasing degrades the model performance. According to the analysis, we demonstrate that mitigating the gender bias \textbf{does not} inevitably cause performance degradation. We discover that only mitigating the gender bias caused by the embedding errors will not cause performance degradation. Analyzing the causality structure also enables us to demonstrate that the embedding-errors-induced gender bias can be separately mitigated by a low-cost fine-tuning approach. We develop a self-training fine-tuning approach to correct gender bias induced by the errors of the embedding layer. Our method is able to correct the embedding error when the gender-mutual subspace is unknown and non-Euclidean. Numerical experiments demonstrate the effectiveness of our approach to debiasing and model-performance maintenance. For instance, our method that only fine-tunes the embedding layer is able to achieve similar debiasing effectiveness and better model performance than fine-tuning the whole model. 
\end{abstract}

% keywords can be removed
%\keywords{First keyword \and Second keyword \and More}

\section{Introduction}
\vskip -0.1in
It is unclear whether the degradation of model performance is an inevitable consequence of debiasing approaches for correcting gender bias in pre-trained language models. Pre-trained language models have been highly successful in a variety of natural language processing tasks and are widely used in practice. However, several studies have revealed the presence of gender bias in pre-trained models \cite{bolukbasi2016man,zhaoetal2018learning,bhardwaj2021investigating,vig2020investigating}. For example, gender bias has been identified in the use of pre-trained models for online advertising~\cite{sweeney2013discrimination}, automatic resume filtering systems~\cite{cowgill2018bias,deshpande2020mitigating}, and automatic criminal sentencing~\cite{dressel2018accuracy}. This evidence of bias has raised concerns about the potential risks of using these models in real-world applications, such as recruitment and education. After \citet{Reuters2018Amazon} noticed that the automatic resume filtering adopted by Amazon discriminated against female candidates, the system was abandoned. Thus, many studies have explored fine-tuning methods of mitigating gender biases in pre-trained language models. As a result, many researchers have explored fine-tuning methods for mitigating gender bias in pre-trained language models. 

However, recent discussions suggest that current debiasing approaches are associated with a decline in model performance \cite{barikeri-etal-2021-redditbias,meadeetal2022empirical}. It is necessary to examine the possibility and method of easing the dilemma between model performance and gender bias.
%我们发现通过基于因果干预的可解释的方法能够有效地在纠正模型偏见的同时保护模型的性能。对当前广泛使用的transformer结构的语言模型而言，the sources of the gender bias can only the errors in two architectures of the pre-trained model: the word embedding and the transformer.
We discovered that using a method based on causal intervention can effectively reduce model bias and maintain performance. For transformer-structured language models, the source of gender bias is limited to errors in the pre-trained model's word embedding and transformer architectures.
% Here, we proposed a causal framework to examine the origin of the dilemma. We positioned the sources of the gender bias can only the errors in two architectures of the pre-trained model: the word embedding and the transformer. 
 %Those errors and their interactions lead the gender bias through three latent mechanisms. All the current debiasing methods intervene in one or two of the three mechanisms. The discoveries of the three mechanisms manifested the origin of the dilemma. 
 %对于
 %Among the three mechanisms, two of them are related to the model transformer while the other one is not. The current debiasing approaches all intervened in those two transformer-dependent mechanisms and thus cause performance degradation. 

%We demonstrate that intervening in the explicable mechanism is able to mitigate the bias while maintaining the performance. In fact, errors in word embedding can directly cause gender bias. Correcting that error can gain double dividends. In light of the above observation, we develop a performance-maintained debiasing method. While we notice that the correct embedding is unknown, we develop a gradient-based approach to infer the latent ground truth. We apply our method to debiasing GPT-2. Experimental results show that our method can effectively reduce bias and stably protect the performance of the model.
We  demonstrate that intervening in the explicable mechanisms of a language model is possible to reduce bias while preserving performance. Our findings indicate that errors in the word embedding can directly lead to gender bias, and that correcting these errors will not have a significant impact on the performance of the model. Based on this observation, we developed a debiasing method that is able to maintain performance. As the correct embedding is unknown, we developed a gradient-based approach to infer the latent ground truth. We applied this method to debias GPT-2 and found that it effectively reduced bias while maintaining the performance of the model.

Our work is an experiment using the causal-analysis approach to slove the dilemma between gender bias and model performance. Our contributions are summarized as follows. First, We correct the gender bias of the language model by adjusting the word embedding of the language model, which provides a new way to correct the bias. Second, We positioned the two architectures and three mechanisms that originate the gender bias. We demonstrate the possibility of double dividends. Third, We proposed a double-dividend fine-tuning debiasing method equipped with a causality-detection function. %Our method can significantly mitigate gender 

\section{Related Work}
\vskip -0.1in
Current debiasing research can generally be grouped into four categories. However, studies by \citet{barikeri-etal-2021-redditbias} and \cite{meadeetal2022empirical} have found that the first three types of methods tend to lead to performance degradation.
\vskip -0.05in
\textbf{Data Augmentation.} Data augmentation is a well-established debiasing method for reducing gender bias. \citet{zhaoetal2018learning} were the first to use this method by creating a gender-balanced dataset through gender-swapping, to train an unbiased model. Since then, data augmentation has been applied to various NLP tasks, such as knowledge graph building \cite{mitchell2019model} and machine translation \cite{stanovskyetal2019evaluating}.
\vskip -0.05in
\textbf{Removing Gender Subspace.} Removing the gender subspace is another popular debiasing method, which was proposed after \citet{bolukbasi2016man} demonstrated the link between word embeddings and gender bias. This method aims to remove the gender subspace in contextualized language models by subtracting the embedding's projection on a hypothetical gender subspace \cite{ravfogeletal2020null, liangetal2020towards}. However, this method's effectiveness is limited as it relies on the assumption that the gender subspace is Euclidean, which has been called into question by recent research.
\vskip -0.05in
\textbf{Gender-Equality Regularizer.} Several regularization methods have been developed to eliminate gender bias. Some of these methods focus on addressing the imbalance in the training data by adding regularizers \cite{bordia-bowman-2019-identifying,qian-etal-2019-reducing,lauscher2020general}. Other researchers such as \citet{barikeri-etal-2021-redditbias} proposed new loss functions for both data augmentation and gender-subspace removal. Additionally, \citet{zhang2018mitigating} proposed a generative adversarial approach for bias mitigation.
\vskip -0.05in
\textbf{Causal Inference.} 
%目前已经有一些利用反事实和因果干预的方法在静态词嵌入中进行纠偏的方法。
At present, there are some methods to mitigate gender bias in the static word embedding by counterfactual and causal intervention
\cite{yang2020causal,shin2020neutralizing}.
%但是,据我们所知,还没有人在transformer语言模型上使用基于因果的方法纠正偏见。
However, as far as we know, no one has implemented a causality-based debiasing method on the transformer language model.
%With the causal graph, causal inference helps us find the causal relationship between variables, evaluate the causal effect of variables on the results, and intervene in the system \cite{glymour2016causal}. \citet{vig2020investigating} used the causal inference to analyze the properties of gender bias in large pre-trained language models.

\section{A Causal View on Debiasing Dilemma}
\subsection{Causal Framework of Gender Bias Origination}
%在预训练过程中，优化算法会根据训练数据确定语言模型的参数。由于训练数据的不平衡，会导致模型在训练过程中学到错误的参数，产生社会歧视。其中有一类重要的错误重要是词嵌入在性别子空间上发生了偏差，这在许多研究中表明会直接产生模型的性别歧视。例如

\begin{figure*}[th!]
     \centering
     %\begin{minipage}{0.99\textwidth}
     %    \centering
     %    \includegraphics[width=0.9\textwidth]{le3.png}
         %\caption{}
         %\label{fig:pos_acc_cnndm}
     %\end{minipage}
     \begin{minipage}{0.26\textwidth}
         \centering
         \includegraphics[width=0.9\textwidth]{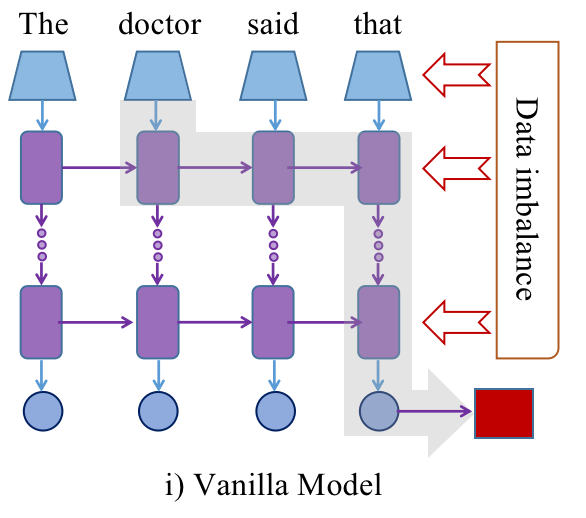}
         %\caption{}
         %\label{fig:pos_acc_cnndm}
     \end{minipage}
     %\hfill
     \begin{minipage}{0.26\textwidth}
         \centering
         \includegraphics[width=0.9\textwidth]{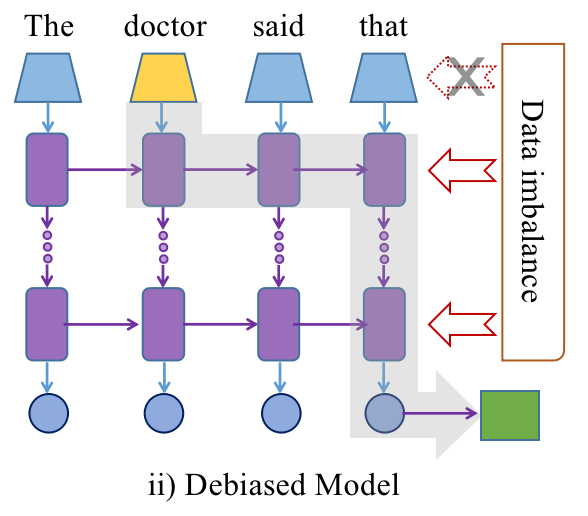}
         %\caption{}
         %\label{fig:pos_acc_samsum}
     \end{minipage}
     \begin{minipage}{0.26\textwidth}
         \centering
         \includegraphics[width=0.8\textwidth]{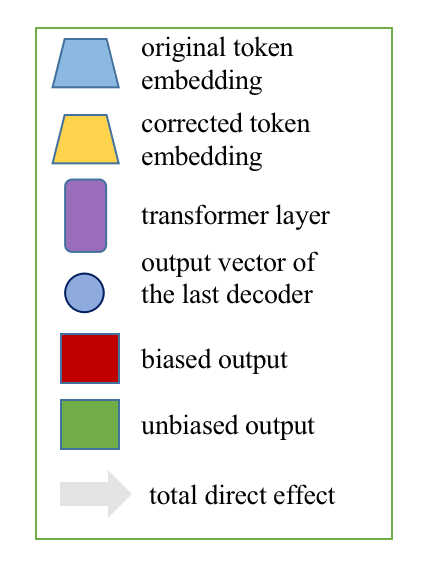}
         %\caption{}
         %\label{fig:pos_acc_samsum}
     \end{minipage}
     \begin{minipage}{0.8\textwidth}
         \centering
         \includegraphics[width=0.9\textwidth]{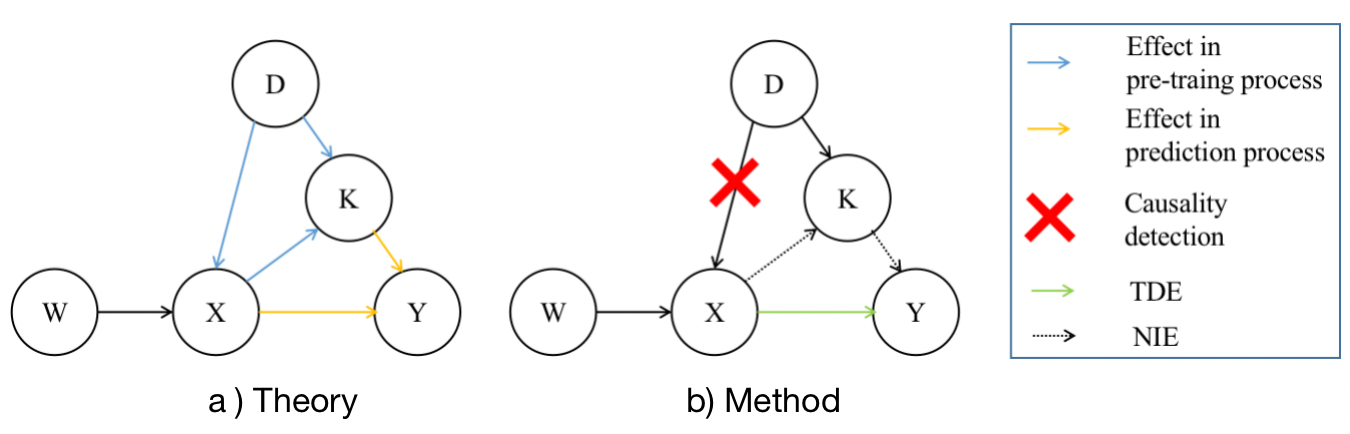}
         %\caption{}
         %\label{fig:pos_acc_samsum}
     \end{minipage}
        \caption{A directed acyclic graph used to indicate how variables of pre-training data (D), occupation word (W), input feature (X), pre-trained knowledge (K), and gender prediction (Y) interact with each other through causal links.}
        \label{fig:causal}
        \vskip -0.2in
\end{figure*}

Literature has revealed that the gender bias of the language models is oriented from the imbalance of the training data (\cite{bordia2019identifying}). Figure \ref{fig:causal} i explain the pathways of how the training-data imbalance misleads the embedding and transformer layers and consequently causes gender bias. We theoretically clarify the causal mechanism of how the data imbalance causes model gender biases (Figure). Analyzing the causal diagram enables us to have two discoveries: 1) there exists a pathway, the gender bias oriented by which can be fully mitigated while maintaining the model performance. 2) mitigating gender bias can be achieved by an explainable and controllable fine-tuning approach rather than augmentation. 

We discover that the gender bias can be fully mitigated without hurting the model-performance degradation if the bias caused by that the training-data imbalance misleading the fitting of the embedding layers. In the causal diagram Figure \ref{fig:causal} a, the pathway from $D \to X \to Y$ denotes the mechanism. The $D \to X$ denotes the mechanism that data imbalance causes incorrect embedding during training while $X \to Y$ denotes the dynamics that the incorrect $X$ causes the model's gender-biased output. In the misled language model, the gender-mutual words are incorrectly mapped to the embedding vectors that are not perpendicular to the gender subspace \cite{zhaoetal2018learning,bolukbasi2016man}. When the pre-trained model is applied to predict $Y$ according to $W$, the model first converts $W$ to the incorrect embedding vector $X$ and thus leads to a gender-biased prediction $\hat{Y}$. For example, if the word "doctor" is more frequently associated with male words in the training data, it will be mapped to a gender-biased embedding vector. Consequently, gender-bias errors will occur when the words "doctor" is input into the pre-trained model for prediction. 

If we are able to correct the parameters of the embedding layer only, we can mitigate the embedding-error-induced model gender bias while avoiding the model-performance degradation according to the literature. However, in the prediction process, the output $Y$ is simultaneously determined by embedding $X$ and transformer $K$. The data imbalance can also mislead the parameter fitting of the transformer layers $K$. If we pursue the mitigation of the embedding-error-induced model, we have to disentangle the effect induced by $X$'s bias from the effect of $K$'s bias.

Analyzing the causal diagram Fig.~\ref{fig:causal} confirms that the effects of the biases of $X$ and $K$ can be disentangled. When a gender-mutual word $W$ is inputted to predict $Y_0$, a biased model $M$ will output a gender-biased prediction $\hat{Y}_M$. Here, $Y_0$ refers to the probability that the output word is $male$ when there is no gender bias and $Y_0=0.5$. We use total effect ($TE$) to denote the size of the gender bias caused by both the errors in $X$ and $K$
\begin{equation}
\begin{aligned}
    TE &=|\mathbb{E}[\hat{Y}_M|W=w]-\mathbb{E}[Y_0|W=w]|\\
       &=\mathbb{E}[Y|X=x,K=k]-\mathbb{E}[Y|X=x_0,K=k_0],
\end{aligned}
\end{equation}
where $x,k$ are the current value of parameter $X,K$ and $x_0,k_0$ are the correct value of the parameters.
Note that both $X$ and $K$ are the parts of the model $M$ and thus determine TE. In the following theorem, we rigorously explain and prove the two gender biases can be disentangled.
\begin{theorem}\label{lem:decomposed}
The incorrect $X$ misled during the training process is able to be adjusted alone without affecting $K$.
\end{theorem}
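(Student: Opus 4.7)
The plan is to read the required independence directly off the causal DAG in Figure~\ref{fig:causal} and then translate it into an operational statement about parameter updates. In the graph, both $X$ and $K$ are children of the training data $D$ (and, for $X$, of the word $W$), but there is no directed edge between $X$ and $K$; the only node where their influences meet is the collider $Y$. Consequently, any structural equation for $K$ has $X$ absent from its argument list, so a do-intervention on $X$ leaves the mechanism generating $K$ unchanged.

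First, I would formalize the structural causal model implied by the DAG: write $X=f_X(D,W,\varepsilon_X)$, $K=f_K(D,\varepsilon_K)$, and $Y=f_Y(X,K,\varepsilon_Y)$. The claim then reduces to the observation that replacing $x$ by $x_0$ via the operator $do(X:=x_0)$ modifies only $f_X$; because $f_K$ does not take $X$ as an argument, $K$ is invariant under the intervention. This is exactly the kind of single-variable surgery that Pearl's do-calculus validates on a DAG that contains no directed $X\to K$ path.

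Next, I would translate this structural intervention into the learning-algorithm level. Operationally, $do(X:=x_0)$ is realized by a fine-tuning procedure that updates only the embedding parameters while freezing the transformer weights; with the transformer subgraph held fixed, gradients of the debiasing objective flow exclusively through the embedding matrix, and $K$ is left untouched by construction. This is the place where one must be careful: the objective used for fine-tuning must not implicitly tie $K$ back to $X$ (for example through a loss term that backpropagates into the transformer), so I would explicitly declare the trainable-parameter set and point out that restricting it to the embedding layer is what realizes the intervention.

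The main obstacle is making the causal-graph claim compatible with the fact that $X$ and $K$ were \emph{co-trained}: during pre-training the gradient of the loss couples $X$ and $K$ through their common descendant $Y$, which tempts one to think that altering $X$ after the fact should require refitting $K$. The resolution is that this coupling is an artifact of a learning procedure that opens a collider path at $Y$, not a structural dependence between $X$ and $K$; once pre-training is finished, $K$ is a fixed function that can remain so while $X$ is corrected. I would emphasize this distinction in the proof, because the whole force of the theorem is that inferential dependence (via d-connection through the collider $Y$) must not be confused with structural dependence (which the DAG rules out), and only the latter would obstruct the clean intervention on $X$ that the theorem asserts.
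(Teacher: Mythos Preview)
Your argument is sound but takes a genuinely different route from the paper. The paper's entire proof is the one-line algebraic telescoping
\[
TE \;=\; \underbrace{\bigl(\mathbb{E}[Y\mid X=x,K=k]-\mathbb{E}[Y\mid X=x_0,K=k]\bigr)}_{TDE}
\;+\;\underbrace{\bigl(\mathbb{E}[Y\mid X=x_0,K=k]-\mathbb{E}[Y\mid X=x_0,K=k_0]\bigr)}_{NIE},
\]
and the observation that the first bracket varies $X$ while $K$ is held fixed at its current (biased) value $k$. That is all: the isolation of $X$'s contribution is \emph{by construction} in the TDE term, not derived from graph structure. Your approach instead argues from the structural causal model that $K=f_K(D,\varepsilon_K)$ contains no $X$, so $do(X:=x_0)$ leaves $K$ invariant, and then translates this into the operational recipe of freezing the transformer weights during fine-tuning. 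What you gain is an explicit justification for \emph{why} one may hold $K$ fixed and a clean connection to the actual training procedure, including a direct rebuttal of the co-training objection; what the paper gains is brevity, since the telescoping identity needs no assumption about edges at all.

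One caution on your premise: you assert that the DAG has no $X\to K$ edge and that $Y$ is the only meeting point. The paper elsewhere alludes to a $D\to X\to K\to Y$ mechanism (the transformer repeatedly queries $X$), so the figure may in fact contain $X\to K$. If so, your structural-invariance step does not go through as stated; you would need to recast it as a \emph{hard} intervention that severs the $X\to K$ edge post hoc, which is exactly what freezing $K$ during fine-tuning accomplishes. The paper's decomposition sidesteps this subtlety because it pins $K=k$ by fiat inside the TDE term rather than claiming $K$ stays put by graph topology.
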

\vskip -0.1in
\begin{proof}
The total effect can be decomposed into the sum of two parts, the Total Direct Effect (TDE;\citealp{pearl2013direct}) and the Natural Indirect Effect (NIE;\citealp{pearl2013direct}), as follow
\begin{equation}
 \begin{aligned}
      \left. \begin{array}{cc}
    TE=&(\mathbb{E}[Y|X=x,K=k]       \\
       &-\mathbb{E}[Y|X=x_0,K=k]) \\
    \end{array}
     \right\} & TDE\\
     \left. \begin{array}{c}
    +(\mathbb{E}\left[Y|X=x_0,K=k\right]\\
    -\mathbb{E}\left[Y|X=x_0,K=k_0\right]).
    \end{array}
     \right\} & NIE\\
\end{aligned}
\end{equation}
\vskip -0.4in
\end{proof}
According to the above theorem, the TDE is purely caused by gender-biased embedding. Further, the size of TDE is monotonic to the gap between the gender-biased embedding vectors and the correct one. The $X$ minimizing TDE is also the embedding vectors avoiding the gender bias. Thus, if we are able to figure out the $X$ that minimizes TDE, we only correct the embedding-error-induced gender bias and will not degrade the model performance. 

\textbf{Remark:} The above theoretic analysis also implies why the current fine-tuning methods of debiasing for the pre-trained model in literature cause performance degradation. We argue that the current literature aims at minimizing TE rather than TDE, which causes the risk of model-performance degradation. The current fine-tuning approach mainly modifies the objective function of the training process by including the gender-equality regularizer. Then, according to the modified objective function, the whole model is fine-tuned. While the whole TE is included in the objective function, the parameters of K can change during the fine-tuning process, which can cause a change in the model performance. While the literature has confirmed that adjusting the embedding-error-induced bias will not cause performance degradation. The degradation in literature has to come from adjusting the transformer-error-induced bias.

\vskip -0.1in
\section{Causality-Detection Debiasing for Double Dividend}

According to the theoretical analysis, we discover that the embedding-error-induced gender bias is able to be mitigated by a fine tuning approach. The above analysis implies that we shall explore a fine-tuning approach that only minimize TDE but not influence the parameters in the latter transformer layers. The fine-tuning approach enables us to only caliberate the parameters in the embedding layer while maintain all other parameters in the transformer layers. Here, we propose a {\bf D}ebiasing {\bf A}ppproach with {\bf M}aintaining {\bf P}erformance (DAMP) and apply it in GPT-2 model. The key of DAMP is to find the correct token embedding $X$ through TDE without any other impact, as shown in Figure \ref{fig:method}. So as to complete the task of correcting deviation without affecting the performance of the model. 

\begin{figure}[h!]
\centerline{\includegraphics[width=0.27\textwidth]{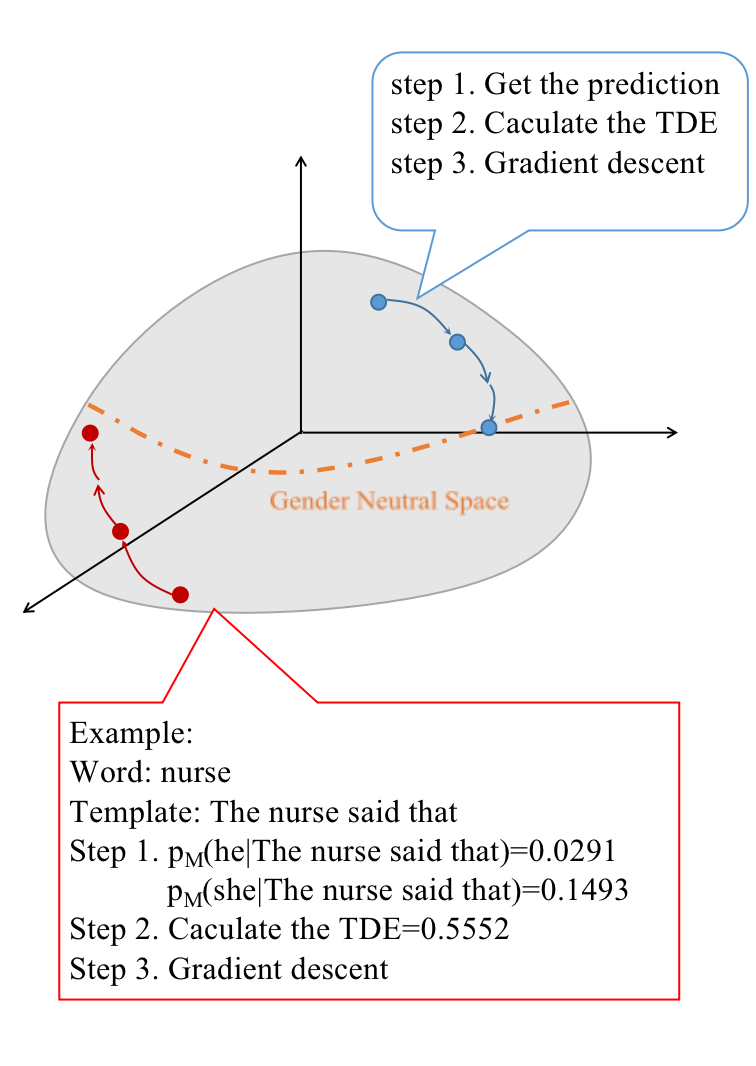}}
\caption{The panorama of our DAMP debiasing method with an example.}
\label{fig:method}
\vskip -0.3in
\end{figure}

There are three challenges of developing the fine-tuning approach for mitigate the embedding-error-induced gender bias. First, there lacks an systematical approach to systematically sample sufficient $\hat{Y}_M$, which is critical for calculating TDE. To estimate the TDE, it is necessary to generate templates to generate sufficient samples of $\hat{Y}_M$. In contrast, literature mostly focus on the performance in several particular templates. The sampling method is absent from the previous studies. Second, there is currently no established method to disentangle TDE from TE. Third, it is hard to directly reduce TDE through projection or other geometric approach because that the word embedding spaces of GPT-2 and other transformer language models are non-Euclidean. Thus, our DAMP method respectively solve the three challenges. Sequentially, the DAMP method includes three steps:
%Therefore, we have to figure out the method of 
%\begin{itemize}
%    \item Calculating the $TDE$ of $X$ for every gender-neutral word $W$;
%    \item Minimizing the $TDE$ to correcting $X$ towards %the ground truth $X_0$.
%\end{itemize}
%对TDE的分析展现给我们一个让预训练模型通过fine-tuning方法自己学会W→X真实因果关系、去除因为D带来的相关性噪音的方案。对于一个大规模预训练模型而言,TDE是衡量W→X关系被D扭曲程度的指标：TDE越低,那么W→X被D扭曲的水平就约小。因此,我们可以通过fine-tuning,让机器explore新的X来降低TDE,从而摈除Data 不平衡对机器学习W→X因果关系的干扰。
%In order to verify our theory, we implemented causal detection debiasing for double divide on GPT-2, which corrected the gender discrimination of occupation vocabulary  $\mathcal{V}_{o}$ without affecting the performance of the model. 
%下面这段不清楚。可以调整为,在GPT-2上,我们可以怎么算NDE。
%Each of the above tasks encounters a challenge. There lacks a method of estimating the $TDE$. Furthermore, the value of $X_0$ is unclear. Fortunately, the feature $X$ is stored in the token-embedding in the model of GPT-2, which consists of a continuous space \cite{bolukbasi2016man}. Therefore, we can develop a gradient-searching method of guiding $X$ to approach the unknown $X_0$. 

\textbf{Step 1.} Develop the approach of generating the templates for sufficiently sampling $\hat{Y}$%Get the language model's gender prediction $Y$ of a neutral word $W$ by the generated templates.

\textbf{Step 2.} Disentangle the TDE from TE by preventing the fine-tuning process from affecting $K$, the parameters of transformer layers. %which current embedding of professional words on gender justice.  We take the TDE, instead of the projection on the bias direction, as the distance between the embedding and the gender-neutral surface. 

\textbf{Step 3.} Develop the gradient approach to minimize the TDE value in order to correct the parameters of the embedding effects, so that the token embedding of input word is gender unbiased. 

In summary, the above three steps together enable a self-training fine-tuning approach to correct the errors in the embedding layer, which cause the language model's gender bias and are oriented by the training-data imbalanced. In the rest of this section, we provide the details of each step. 

%tasks manipulate the token embedding of the inputted word to reduce TDE. 
%we can enable a debias method by manipulating the token embedding of inputted word. We can correct the bias by manipulating the token embeddings from the gender-biased position to a position on the gender-neutral surface. The whole debiasing process can be divided into three steps.

%The value of $TDE$ can be estimated by 
%\citet{bolukbasi2016man} suggest that gender bias is captured by a bias direction in the word embedding. We discover the similar property in GPT-2. The gender bias occurs once the projection of a word's token embedding deviates from gender-neural surface. The deviation direction of token-embendding projection is parallel with the gender-bias direction. Thus, we suggest that the gender bias of GPT-2 model is contributed by the deviation of token embeddings of inputted words from the gender-neutral surface.

%We discover that gender bias occurs once the projection of a word's token embedding deviates from gender-neural hyper surface. The deviation direction of token-embendding projection is parallel with the gender-bias direction. Thus, we suggest that the gender bias of GPT-2 model is contributed by the deviation of token embeddings of inputted words from the gender-neutral hyper surface.

%\begin{figure*}[ht]
%\vskip -0.1in
%\begin{center}
%\centerline{\includegraphics[width=0.8\textwidth]{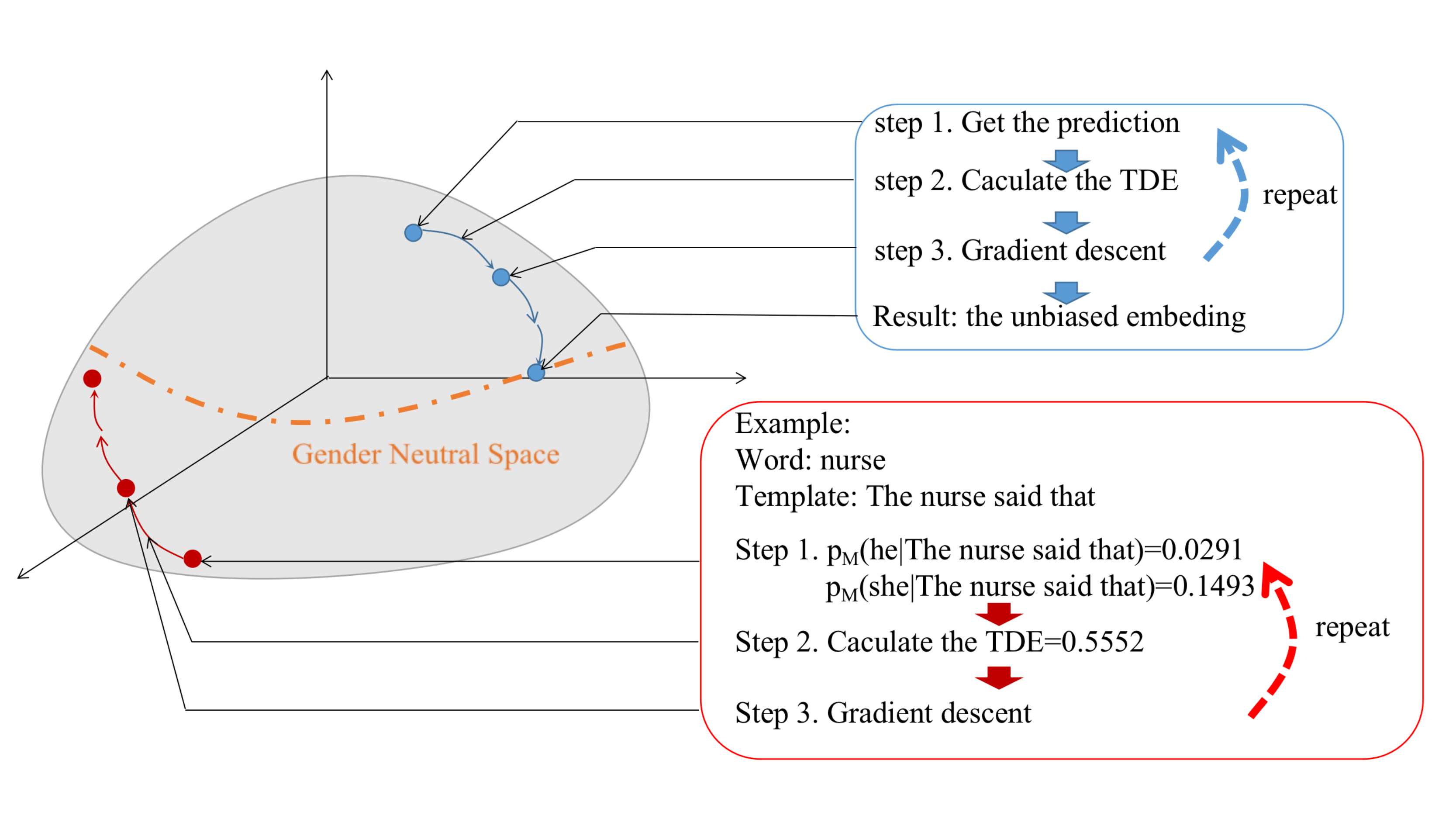}}
%\caption{The panorama of our DAMP debiasing method with an example.}
%\label{fig:nespace}
%\end{center}
%\vskip -0.2in
%\end{figure*}

\subsection{Get Gender Prediction}\label{sec:gt}

Given a language model $M$ and a sequence of tokens $w_1, \cdots, w_k$, we define $p_M(w_{k+1} | w_1, ..., w_k)$ as the probability assigned by the language model to the token $w_{k+1}$ being the next token in the sequence. In general, the language model cannot directly provide the gender judgment of a certain professional vocabulary $w$. However, by designing a reasonable template as input for the model, we can get the model's prediction of the next word, and indirectly obtain the model's gender prediction $\hat{Y}$ for $w$. For example, when the template is "The nurse said that" and the options are the pronouns "he" and "she", we can observe the gender prediction $Y$ of "nurse" from the word selection of the model. We can calculate the distribution of $\hat{Y}$ using the following equation:

%Given a language model $M$ and a sequence of tokens $w_1, \cdots, w_k$, let $p_M(w_{k+1} | w_1, ..., w_k)$ denote the probability that the language model assigns to $w_{k+1}$ being the next token. For each token embedding $X$ of word $w$, we use natural language prompts to obtain the gender prediction $Y$ of $w$. That is, we supplement the generated text $t$ with options for the next word corresponding to the possible value of $Y$ and prompt the model to generate an answer. There is an example shown in Figure \ref{fig:nespace}. In the above example, the template is \textit{The nurse said that} and options is pronoun \textit{he,she} for the given word \textit{nurse}. Then we can calculate the distribution of Y by the following equ\ref{equ:distribution}.

\begin{equation*}\label{equ:distribution}
    \begin{array}{cc}  
             p(Y=\text{male}|t)= & \frac{p_M(\text{he}|t)}{p_M(\text{he}|t)+p_M(\text{she}|t)},\\
             p(Y=\text{female}|t)= & \frac{p_M(\text{she}|t)}{p_M(\text{he}|t)+p_M(\text{she}|t)}.
             \end{array}  
\end{equation*}

%\todo{discussion and illustration of the type of the generated templates}
Here, we propose an automatically template generating approach enabling the gradient search of our DAMP method. Our templates, denoted by $t$, are designed to meet two requirements:
\begin{itemize}
    \item \textbf{Intermediary:} The context should link the occupation and gender. To achieve this, we require the text to contain the target occupation words and have a probability of the pronoun "he" and "she" greater than some threshold $s$.
    \item \textbf{Neutrality:} The text should not contain information suggestive of gender. To achieve this, we require the text to be composed of gender-neutral words, specifically, we require $t$ to have no intersection with a list of gendered words $\mathcal{V}_{gender}$ provided by \citet{zhaoetal2018learning}.
\end{itemize}
Further, we propose a template generation algorithm that utilizes the text generation capabilities of GPT-2. Our algorithm starts by fixing the beginning of the template to include the target occupation word, and then uses GPT-2 to continue generating the template through top-k sampling. The sampling continues until the predicted probability of selecting "he" and "she" as the next word is greater than the threshold. To ensure efficiency, we set a maximum length for the top-k sampling at 15 words. If this maximum length is reached without satisfying the probability threshold, the sampling process is restarted from the beginning. Algorithm \ref{alg:tg} explains the detailed workflow of the template generation process.

\begin{algorithm}
   \caption{Templates Genderation}
   \label{alg:tg}
\begin{algorithmic}
   \STATE {\bfseries Input:} Occupation word $w$, GPT-2 model $p_M$ 
   \STATE {\bfseries Output:} A template for gender prediction 
   \STATE Initialize $t=$'the w'
   \REPEAT
   \STATE Get the next words $w'\sim p_M(w'|t)$ 
   \IF{$w'\notin {V}_{gender}$}
   \STATE Add $w'$ at the end of $t$
   \ENDIF
   \IF{$len(t)>15$}
   \STATE Set $t=$'the w'
   \ENDIF
   \UNTIL{ $p_M(he|t)>s,p_M(she|t)>s$}
   \STATE {\bfseries return:} t
\end{algorithmic}
\end{algorithm}
\vskip -0.2in
\subsection{Calculation of TDE}

With gender predicting the distribution of Y, we can calculate the  $TDE$  of $X$ on $Y$.
 For gender-neutral occupations, we know prior that they are independent of the gender prediction. To simplify the process, we use a uniform distribution to substitute for  $\hat{Y}_ k|do(X=x_0)$. Combined the Eq.\ref{equ:distribution}, we have
$$\left\{  
             \begin{array}{cc}  
             |TDE_t| = & |1/2-p_1|=|1/2-p_2|,  \\  
             p_1(t)= &\frac{p_M(\text{he}|t)}{p_M(\text{he}|t)+p_M(\text{she}|t)},\\  
             p_2(t)= &\frac{p_M(\text{she}|t)}{p_M(\text{he}|t)+p_M(\text{she}|t)}.    
             \end{array}  
\right.$$ 
for given template t. To avoid the influence of the template itself on the results, we select enough templates $t_1,...,t_n$ and use the average TDE for every template as the final TDE:
\begin{equation}
    TDE=\frac{1}{n}\sum_{i=1}^{n}TDE_{t_i}.
\end{equation}

%$$TDE=1+\frac{p_M(\text{he}|t)}{p_M(\text{he}|t)+p_M(\text{she}|t)}\log \frac{p_M(\text{he}|t)}{p_M(\text{he}|t)+p_M(\text{she}|t)} +\frac{p_M(\text{she}|t)}{p_M(\text{he}|t)+p_M(\text{she}|t)}\log \frac{p_M(\text{she}|t)}{p_M(\text{he}|t)+p_M(\text{she}|t)}$$

\subsection{Gradient Descent}

According to our previous discussion, as a debiasing method with minimal side effects, we should adjust the embeddings for occupational words in order to minimize the TDE, that is
\begin{equation} \label{eq:opt}
    \min_{do(X=\hat{x})} |TDE|.
\end{equation}
Due to the limitations of pre-trained knowledge, the variable $\hat{x}$ in optimization problem \ref{eq:opt} has a limited range of values. As noted by \citet{ethayarajh-2019-contextual}, word representations tend to occupy a narrow cone in the vector space. Therefore, it is important to ensure that the selected features meet the constraint
\begin{equation}
    ||\hat{x}-x||^2<r,
\end{equation}
where $x$ is the original word embedding and $r$ is the radius of the cone. Thus, the optimization problem is summarized as 
\begin{equation}
    \min_{do(X=\hat{x})} TDE \ s.t. \ ||\hat{x}-x||^2<r
\end{equation}
Due to the uncertainty of the specific value of r, we utilize the Sequential Unconstrained Minimization Technique (SUMT) to find a solution to the problem at hand. To do this, we design the following loss function:
\begin{equation}
 \begin{aligned}
    L= &\sum_{i}^{n}p_1(t_i)\log p_1(t_i)+p_2(t_i)\log p_2(t_i)\\
    &+\alpha||\hat{x}-x||^2,
\end{aligned}
\end{equation}
Next, we use an optimizer based on gradient descent to minimize the loss function $L$. In summary, we use algorithm \ref{alg:1} to obtain unbiased embeddings for gender-neutral occupation words $w$.

\begin{algorithm}[!h]
   \caption{DAMP Debiasing Method}
   \label{alg:1}
\begin{algorithmic}
   \STATE {\bfseries Input:} word $w$ and pre-trained LM $p_M$
   \STATE {\bfseries Output:} unbiased embedding $\hat{x}$ for $w$
   \STATE Extract the pre-trained embedding $\hat{x}_0=x$ of $w$.
   \STATE Generate the template $t_1,\cdots,t_n$.
   \FOR{$k=1$ {\bfseries to} $m$}
   \STATE Initialize $TDE =0$
   \FOR{$i=1$ {\bfseries to} $n$}
   \STATE $p_1= \frac{p_M(\text{he}|t)}{p_M(\text{he}|t)+p_M(\text{she}|t)}$
   \STATE $p_2= \frac{p_M(\text{she}|t)}{p_M(\text{he}|t)+p_M(\text{she}|t)}$
   \STATE $L_{TDE} = L_{TDE} + \frac{1}{n}(1+p_1\log p_1+p_2\log p_2)$
   \ENDFOR
   \STATE $L_{k-1}=L_{TDE}+\alpha||e_{k-1}-e_0||^2$
   \STATE $\hat{x}_k=\hat{x}_{k-1}-\lambda \frac{\partial L_{k-1}}{\partial e_{k-1}}$
   \STATE Set $\hat{x}_k$ as the embedding of $w$ in $p_M$.
   \ENDFOR
   \STATE \textbf{return} $\hat{x}=\hat{x}_m$
\end{algorithmic}
\end{algorithm}
%However, due to the complexity of GPT-2 model, it is difficult to solve the optimal solution of problem \ref{equ:ob1}. Thus we obtain an approximate optimal solution by reducing the loss function

\begin{table*}[!t]
\caption{The perplexity results of the original and different debiased GPT-2 models.  If the perplexity score is lower, the model performs better. 
We show the lowest results of perplexity in bold.}
\label{tab:perplexity}
    \centering
    \begin{tabular}{|l|l|l|l|l|l|l|}
    \hline
        perplexity & vanilla &  INLP & SD & GEL & DAMP(ours) \\ \hline
        small & 25.1711 &  6.73E+16 & 2.67E+11 & 1.09E+46 & \textbf{26.2769} \\ \hline
        medium & 18.4633 &  1.68E+15 & 157.6118 & 2.05E+32 & \textbf{36.9222}\\ \hline
        large & 16.4447  & 21.0505 & \textbf{16.4732} & 17.133 & 16.5336 \\ \hline
        xl & 14.7881 & 17.6801 & 15.2138 & 15.451 & \textbf{14.9612} \\ \hline
    \end{tabular}
\end{table*}
\vskip -0.2in
\begin{table*}[]
\caption{Scores of different debiasing methods on 9 test tasks of GLUE. We bold the result with the highest average result.}
\label{tab:glue}
\centering
\begin{tabular}{lllllllllll}
\hline
     & CoLA  & MNLI  & MRPC  & QNLI  & QQP   & RTE   & SST   & STS-B & WNLI  &   Average       \\
\hline
GPT-2(small) & 29.1  & 82.43 & 84.51 & 87.71 & 89.18 & 64.74 & 91.97 & 84.26 & 43.19 & 73.01    \\
INLP & 31.79 & 82.73 & 84.34 & 87.81 & 89.17 & 64.38 & 92.01 & 83.99 & 41.31 & 73.06    \\
SD   & 30.2  & 82.56 & 84.43 & 87.9  & 89.09 & 64.86 & 91.97 & 84.18 & 38.5  & 72.63    \\
GEL  & 16.33 & 82.21 & 86.42 & 87.81 & 85.02 & 66.43 & 92.32 & 84.09 & 35.21 & 70.65 \\
DAMP  & 31.87 & 82.62 & 85.48 & 87.81 & 89.22 & 64.98 & 92.55 & 82.77 & 40.89 & \textbf{73.13}\\
\hline
GPT-2(medium) & 52.43 & 85.92 & 83.26 & 87.71 & 87.87 & 64.94 & 93.58 & 87.38 & 49.3  & 76.93 \\
INLP & 50.46 & 85.66 & 87.33 & 90.83 & 87.78 & 67.15 & 93.81 & 87.31 & 28.17 & 75.39 \\
SD   & 53.12 & 85.63 & 88.25 & 90.46 & 87.84 & 64.98 & 93.92 & 87.22 & 39.44 & 76.76 \\
GEL  & 47.85 & 85.6  & 86.66 & 87.81 & 87.79 & 66.79 & 94.27 & 86.54 & 40.85 & 76.01 \\
DAMP  & 51.54 & 85.94 & 82.54 & 90.72 & 87.92 & 63.9  & 94.5  & 87.44 & 46.48 & \textbf{76.78}\\
\hline
GPT-2(large) & 55.44 & 85.32 & 88.78 & 91.14 & 88.09 & 74.73 & 89.31 & 89.71 & 43.66 & 78.46 \\
INLP & 60.07 & 84.26 & 90.2  & 92.13 & 84.26 & 75.45 & 93.92 & 89.49 & 33.8  & 78.18 \\
SD   & 58.8  & 83.99 & 89.04 & 91.87 & 85.31 & 74.01 & 94.04 & 89.8  & 33.8  & 77.85 \\
GEL  & 58.65 & 84.48 & 88.08 & 91.82 & 85.19 & 74.73 & 89.98 & 89.98 & 25.35 & 76.47 \\
DAMP  & 53.16 & 85.23 & 90.24 & 91.76 & 87.92 & 74.37 & 92.78 & 89.31 & 43.66 & \textbf{78.71}\\
\hline
\end{tabular}
\end{table*}

%\begin{algorithm}
%\caption{DAMP method on one word} %算法的名字
%\hspace*{0.02in} {\bf Input:} %算法的输入, \hspace*{0.02in}用来控制位置,同时利用 \\ 进行换行
%pre-trained LM $p_\theta^{*}$ and word list $\mathcal{V}$\\
%\hspace*{0.02in} {\bf Output:} %算法的结果输出
%debiased LM $p_\theta$
%\begin{algorithmic}[1]
%\For{$v\in \mathcal{V}$}
%\State set $p_\theta=p_\theta^{*}$
%\State generate templates $t_1,\cdots,t_n$
%\State Get the original token embedding $e_0$ of $v$
%\For{$k=1,2,\cdots,m$}
%\State $L_{k-1}=\alpha||e_{k-1}-e_0||^2+b(e_{k-1})$
%\State $e_k=e_{k-1}-\lambda \frac{\partial L_{k-1}}{\partial e_{k-1}}$
%\EndFor
%\State update $e_m$ as the token embedding of $p_\theta$.
%\EndFor
%\State \Return $p_\theta$
%\end{algorithmic}\label{alg:1}
%\end{algorithm}

For the overall model, we use Algorithm 1 to reduce gender bias for each gender-neutral occupation in the occupation vocabulary. We then replace the embeddings of occupation words with the debiased embeddings, to create a bias-free language model.

In practice, the token representations of different occupation words may share the same tokens due to Byte Pair Encoding (BPE). For example, in GPT-2, the token representation of "jeweler" is $[16927, 263]$ and the token representation of "entertainer" is $[8204, 263]$. They both contain the same token $[263]$. In this scenario, we use the average value of the debiased embeddings corresponding to the token as the final token embedding.
%The abridged general view is shown in the figure in appendix.

\section{Experiments}

\subsection{Setup}
\paragraph{Model} To evaluate the effectiveness of our $DAMP$ method in reducing gender biases, we conduct experiments on the GPT-2 model trained in English, using various model sizes: small (117M parameters), medium (345M parameters), large (774M parameters), and extra-large (xl, 1558M parameters) \cite{radford2019language}. The pre-trained weights are obtained from the Transformers Python library \cite{2019huggingFace}.

\paragraph{Templates} Our templates are generated using the method outlined in Section \ref{sec:gt}. For each occupation word, we generate 500 templates for debiasing. The threshold s is set to $0.08$. We use the occupation vocabulary provided by \citet{vig2020investigating}.

\paragraph{Hyper Parameters} In Algorithm \ref{alg:1}, we set the number of templates $n$ to $500$, the number of optimization iterations $m$ to $100$, and the hyper-parameter $\alpha$ to $1000$. In the experiment, we use the ADAM algorithm to minimize the loss function and set the learning rate $\lambda$ to $0.002$.

\begin{table*}\centering
\caption{The test results on StereoSet of different debiased GPT-2 models. Language Modeling Score (lms) is the percentage of the meaningful answer that the language model prefers over the meaningless association, which is the higher the better. Stereotype Score (ss) is the percentage of the stereotypical association that the model prefers over the anti-stereotypical association, which is closer to 50 the better. Idealized CAT Score (icat), a metric for comprehensive evaluation of bias and performance, is calculated by $lms*\min(ss,10-ss)/50$, which is the higher the better. We show the best results of lms, ss, and icat in bold.}
\label{tab:stero}
\begin{tabular}{|l|lllll|lllll|}
\hline
     & \multicolumn{5}{c|}{small}       & \multicolumn{5}{c|}{medium} \\
     \hline
     & vanilla & INLP   & SD    & GEL   & DAMP  & vanilla  & INLP   & SD     & GEL   & DAMP   \\
     \hline
lms  & 89.47 & 40.98  & 70.18 & 59.6  & \textbf{89.47} & 71.93 & 56.14  & 70.18  & 58.02 & \textbf{71.93} \\
\hline
ss   & 55.77 & 52     & 52.5  & 54.24 & \textbf{49.02} & 53.66 & 53.13  & 52.5   & \textbf{51.06} & 46.34 \\
\hline
icat & 80.70 & 39.34  & 66.67 & 54.55 & \textbf{87.72} & 66.67 & 52.63  & \textbf{66.67}  & 56.79 & \textbf{66.67} \\
\hline
     & \multicolumn{5}{c|}{large} & \multicolumn{5}{c|}{xl}     \\
     \hline
     & vanilla & INLP   & SD    & GEL   & DAMP  & vanilla & INLP   & SD     & GEL   & DAMP   \\
     \hline
lms  & 84.21 & 82.45  & \textbf{87.72} & 78.95 & 84.21 & 85.96 & 82.46  & \textbf{89.47}  & 78.95 & 85.96 \\
\hline
ss  &  52.08 & 55.32  & 54    & 44.44 & \textbf{50}  & 55.10  & 53.2   & 54.9   & 44.44 & \textbf{51.02} \\
\hline
icat & 80.70 & 73.68  & 80.7  & 70.18 & \textbf{84.21} & 77.19 & 77.19  & 80.7   & 70.18 & \textbf{84.21}\\
\hline
\end{tabular}
\vskip -0.2in
\end{table*}

\iffalse
\begin{table*}[t]
\caption{}
    \centering
    \begin{tabular}{|l|l|l|l|l|}
    \hline
        ~  & INLP & Sentence Debias & Gender-Equalizing Loss & our  \\ \hline
        gpt-2   & 0.0826  & 0.9991  & 0.1759  & 0.1437   \\ \hline
        gpt2-medium  & 0.6240  & 0.2984  & 0.0720  & 0.1163   \\ \hline
        gpt2-large   & 0.2192  & 0.3070  & 0.1204  & 0.0847   \\ \hline
        gpt2-xl  & 0.2538  & 0.3522  & 0.0950  & 0.0783  \\ \hline
    \end{tabular}
\vskip -0.1in
\end{table*}
\fi

%Next, we are comparing bias and perplexity with baseline. Take the result in GPT-2(medium) as example, which is shown in table \ref{tab:med}, our self training method not only significantly reduce the bias, and also controls the increase of confusion. The debias results of GPT-2 with other scale are in the appendix.
\begin{table*}[!h]
\caption{The bias effect size results of different debiasing methods on the test sets (SEAT-6 SEAT-6b ). If the result is closer to zero, the model has a lower level of bias.}
\label{tab:seat}
%\vspace{-0.15in}
\begin{center}
%\begin{small}
%\begin{sc}
\begin{tabular}{|l|llll|llll|}
\hline
            & \multicolumn{4}{c|}{small}                                                                                                                                                                     & \multicolumn{4}{c|}{medium}                                                                                                                                                              \\ \hline
            & \multicolumn{1}{l|}{INLP}                          & \multicolumn{1}{l|}{SD}                            & \multicolumn{1}{l|}{GEL}                            & DAMP                           & \multicolumn{1}{l|}{INLP}                           & \multicolumn{1}{l|}{SD}                            & \multicolumn{1}{l|}{GEL}                           & DAMP                           \\ \hline
SEAT-6      & \multicolumn{1}{l|}{0.2398}                        & \multicolumn{1}{l|}{{ 0.2026}} & \multicolumn{1}{l|}{0.3892}                         & { 0.1377} & \multicolumn{1}{l|}{{ -0.1671}} & \multicolumn{1}{l|}{0.8271}                        & \multicolumn{1}{l|}{{ 0.1589}} & 0.2678                        \\ \hline
SEAT-6b     & \multicolumn{1}{l|}{0.8656}                        & \multicolumn{1}{l|}{{ 0.0097}} & \multicolumn{1}{l|}{0.3033}                         & { 0.0029} & \multicolumn{1}{l|}{0.8158}                         & \multicolumn{1}{l|}{{ 0.1181}} & \multicolumn{1}{l|}{-0.1182}                       & { 0.0708} \\ \hline
            & \multicolumn{4}{c|}{large}                                                                                                                                                               & \multicolumn{4}{c|}{xl}                                                                                                                                                                  \\ \hline
            & \multicolumn{1}{l|}{INLP}                          & \multicolumn{1}{l|}{SD}                            & \multicolumn{1}{l|}{GEL}                            & DAMP                           & \multicolumn{1}{l|}{INLP}                           & \multicolumn{1}{l|}{SD}                            & \multicolumn{1}{l|}{GEL}                           & DAMP                           \\ \hline
SEAT-6      & \multicolumn{1}{l|}{1.2298}                        & \multicolumn{1}{l|}{1.1103}                        & \multicolumn{1}{l|}{{ 0.8263}}  & {1.0937} & \multicolumn{1}{l|}{1.0574}                         & \multicolumn{1}{l|}{1.0532}                        & \multicolumn{1}{l|}{{ 0.8263}} & {1.0335} \\ \hline
SEAT-6b     & \multicolumn{1}{l|}{0.3269}                        & \multicolumn{1}{l|}{0.2491}                        & \multicolumn{1}{l|}{{-0.0599}} & {0.1933} & \multicolumn{1}{l|}{{0.2539}}  & \multicolumn{1}{l|}{0.282}                         & \multicolumn{1}{l|}{{ -0.128}} & 0.304                         \\ \hline
\end{tabular}
%\end{sc}
%\end{small}
\end{center}
\vskip -0.2in
\end{table*}

\subsection{Baseline}

We compare our $DAMP$ method with the following methods: Iterative Nullspace Projection (INLP; \citealp{ravfogel2020null}), Sentence Debias(SD; \citealp{liangetal2020towards}), and fine-tuning method with Gender Equalizing Loss (GEL; \citealp{qian-etal-2019-reducing}, \citealp{barikeri-etal-2021-redditbias}) \footnote{The implementation of the baseline methods uses code from  \href{https://github.com/umanlp/RedditBias}{https://github.com/umanlp/RedditBias}  and \href{https://github.com/umanlp/RedditBias}{https://github.com/umanlp/RedditBias}.}. 
Details of the baseline methods are provided in Appendix.

We apply our approach and the baselines to the pre-trained GPT-2 models of different sizes and compare them with several benchmarks and metrics. We compared the $DAMP$ method with the baseline methods in two aspects: the effectiveness of debiasing and the capability of performance maintenance. 
We compare the $DAMP$ method with the baseline methods in two aspects: the effectiveness of debiasing and the ability to maintain performance. We measure the effectiveness of debiasing using the metrics associated with StereoSet \cite{nadeem-etal-2021-stereoset} and SEAT (Sentence Encoder Association Test) \cite{may-etal-2019-measuring}. We evaluate the ability to maintain performance using the metric of Perplexity \cite{Merity2016Pointer} and the metrics proposed in StereoSet and GLUE (General Language Understanding Evaluation) \cite{wangetal2018glue}.
\vskip -0.05in
\textbf{Perplexity} Perplexity is a measure of how well a language model performs. A lower perplexity indicates better performance. We compute the perplexity on the wikitext-2 dataset \cite{Merity2016Pointer} for GPT-2 models of different sizes using four debiasing methods. The results are shown in Table~\ref{tab:perplexity}. 
\vskip -0.05in
\textbf{GLUE
%\footnote{The test code is from \href{https://github.com/jorgebastida/glue}{https://github.com/jorgebastida/glue}}
}
We also evaluated the performance of the debiased models on the GLUE benchmark, which is a multi-task natural language understanding benchmark and analysis platform developed by \citet{wangetal2018glue}. The test results on the GLUE dataset reflect the understanding ability of the language model. The performance of the debiased small GPT-2 is shown in Table~\ref{tab:glue}, and the results of medium GPT-2 and large GPT-2 are shown in the Appendix. 
\vskip -0.05in
\textbf{StereoSet
%\footnote{The data set is from \href{https://stereoset.mit.edu}{https://stereoset.mit.edu}. We only used test data related to both gender and occupation.}
}
StereoSet is a large-scale dataset for measuring stereotypical bias and performance of the language models\cite{nadeem-etal-2021-stereoset}. Associated with the dataset, \cite{nadeem-etal-2021-stereoset} also provided the metrics to assess the model's stereotypical bias and performance. We selected the part data in Stereoset that is related to both gender and occupation for testing, and the results are shown in the table\ref{tab:stero}.
\vskip -0.05in
\textbf{SEAT
%\footnote{The dataset is available from \href{https://aclanthology.org/attachments/N19-1063.Datasets.zip}{https://aclanthology.org/attachments/N19-1063.Datasets.zip}}
} Sentence Encoder Association Test (SEAT) is an extension by \citet{may-etal-2019-measuring} from the Word Embedding Association Test \cite{caliskan2017semantics}. SEAT is able to be adopted to assess the size of gender bias and has the associated metrics. We tested all the methods by SEAT and the result is shown in Table \ref{tab:seat}.

\subsection{Results}

\begin{figure*}[h!]
\centerline{\includegraphics[width=0.8\textwidth]{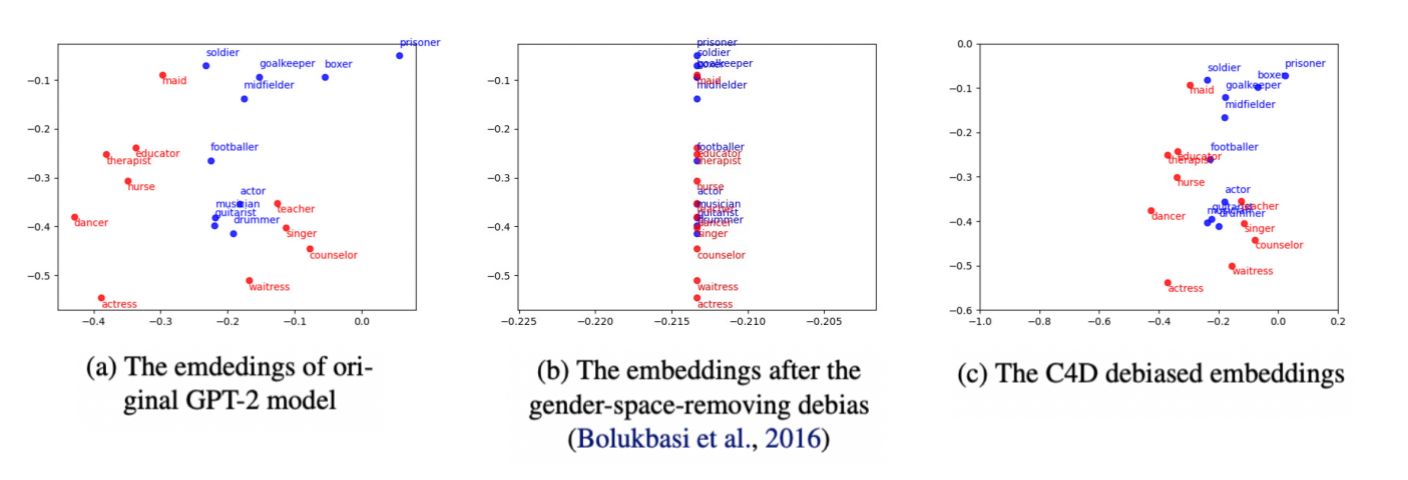}}
\caption{The token embeddings’  projection of selected words of occupations to the gender subspace. }
\end{figure*}

\begin{figure*}[h!]
\centerline{\includegraphics[width=0.8\textwidth]{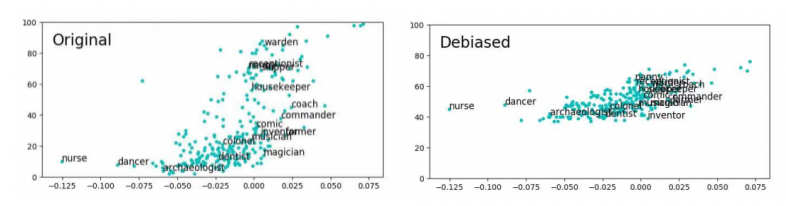}}
\caption{The number of male neighbors for each token embedding of occupation changes with its original bias, before and after debiasing. }
\label{fig:nei}
\end{figure*}
%We analyze our results from two aspects, performance and debiasing. In summary, our model is the only one simultaneously leads to significant mitigation of gender bias and maintains the model performance. 

The DAMP method effectively mitigates gender bias. In some cases, its debiasing effectiveness is equivalent to or even surpasses that of existing methods. For example, it has the lowest gender-bias score on the Stereoset for GPT-2 of small, large, and xl sizes. On the SEAT dataset, the DAMP method ranks first in the tests of small-seat6, small-seat6b, and medium-seat6b. 

Furthermore, the DAMP method effectively preserves model performance. Among all tested methods, it causes the lowest perplexity on the GPT-2 of small, medium, and xl sizes. On the large GPT-2, its perplexity ranks second, only $0.4\%$ worse than the highest score, while the highest-scoring method has significantly worse debiasing effectiveness. On the GLUE dataset, the DAMP method achieves the best average score. These results demonstrate that our method effectively preserves model performance while mitigating gender bias.

%The $DAMP$ method also essentially mitigates the gender bias. In some cases, the $DAMP$'s effectiveness of debiasing is equivalent to or even better than many of the current methods. For instance,  the $DAMP$ has the lowest gender-bias score on the Stereoset for GPT-2 of small, large, and xl sizes. On SEAT, the $DAMP$ method ranks first in the tests of small-seat6, small-seat6b, and medium-seat6b. 

%At the same time, the DAMP method can effectively protect the model performance. Among all tested methods, our $DAMP$ method causes the lowest perplexity on the GPT-2 of the small, medium, and xl sizes. The perplexity of $DAMP$ ranks second on the large GPT-2 but only $0.4\%$ worse than the highest score. However, the highest-score method has significantly worse debiasing effectiveness than $DAMP$. On the GLUE dataset, the $DAMP$ method achieves the best average score. These pieces of evidence show that our method can effectively protect the model performance.

Note it is not a surprise that the DAMP does not the most effective debiasing approach in some tests because DAMP only minimizes TDE rather than TE. In the tests that NIE significantly contributes the TE, the DAMP may have less debiasing effectiveness than those approaches minimizing TE. However, the tests manifest that DAMP has much better model performance than the TE-optimal methods in those tests. Further, the DAMP also substantially mitigated the gender bias in those tests that NIE has a large size. Consequently, the comprehensive scores of DAMP are always the best in all tests. 

Overall, our approach is able to correct the gender-occupation bias with little impact on performance. On all models of all sizes, the Stereoset comprehensive evaluation index $icat$ of our method exceeded the baseline. Although the correction effect of our method is not as good as that of some baseline methods at some test points, our method can better protect the model performance under these circumstances. This is consistent with our causal theory.

In summary, our analysis of the results revealed that our model effectively balances performance and debiasing. Specifically, it is the only model that simultaneously achieves a notable reduction in gender bias while maintaining high performance.
%In order to ensure the performance of the model, we only intervene in the path of partial bias that doesn't tangle with performance.

%On the one hand, from the results of perplexity (Table \ref{tab:perplexity}), our DAMP debiasing method has the best protection for the performance of the pre-trained model compared with other methods. Especially for GPT-2 models of large and xl , our method has hardly any performance degradation compared with the original model. On the other hand, in the test of SEAT, our method significantly reduces the bias level of the model. Especially for small and medium scale models, our correction method presents the state of the art performance. The projection of token embedding space in our method can explain the reason of double dividend, that is, our method keeps the original shape while making the projection on gender subspace of embeddings more concentrated.
\subsection{Geometry of Word Embedding}

 To analyze the mechanism of double dividends in our $DAMP$ method, we compare its functioning process with the removing-gender-subspace method of \citet{bolukbasi2016man} from a geometric perspective. Using Fig.~3, we project token embeddings of a group of words before and after debiasing onto a two-dimensional plane, which represents the gender subspace calculated by PCA. The left-hand side plot shows the distribution of token embeddings before debiasing, the center plot shows the distribution after the gender-space-removing debiasing, and the right-hand side plot shows the distribution after the DAMP debiasing. The gender-space-removing debiasing method forces the embeddings to be perpendicular to the $X$-axis, which can destroy other attributes of word embeddings outside the gender, leading to a decline in the performance of the model. In contrast, the DAMP debiasing method retains the topology of the original embeddings while reducing the distance between professions in the sex subspace. The retention of non-gender information in the topology structure is an important reason for the obvious retention of performance.

In addition, we can also observe the cleanliness of bias elimination through geometric methods. 
\citet{gonengoldberg2019lipstick} argues that the debiasing methods, such as HARD-DEBIASED \cite{barikeri-etal-2021-redditbias} and GN-GLOVE \cite{zhaoetal2018learning}, only remove bias by projection but do not handle bias by neighbors. \citet{gonengoldberg2019lipstick} proves the comment by the projection-neighbor figure which is almost no change before and after debiasing. For our method, we also draw the projection-neighbor figure in the same way, as shown in Figure~\ref{fig:nei}. It can be observed that the occupational words, are distributed near an S-curve before debiasing and near a straight line parallel to the x-axis after debiasing. This shows that our method can mitigate the bias-by-neighbors, which also explains why our method can effectively remove the bias of the language model. 
 
\section{Conclusions}

We propose a causal framework that explains the origin of gender bias and unifies the mechanisms for debiasing pre-trained models. This framework also addresses the bias-performance trade-off that is present in current debiasing methods. Based on this theory, we develop the DAMP method which reduces gender bias by adjusting token embedding. The experimental results demonstrate that our method can significantly reduce gender bias while maintaining performance.

%二象限图

\bibliographystyle{unsrtnat}
\bibliography{references} 

\end{document}